\newtheorem{lemma}{Lemma}  
\newtheorem{theorem}{Theorem}
\newenvironment{proof}{{\noindent\it Proof.}\quad}{\hfill $\square$\par}
\DeclareMathOperator*{\argmax}{arg\,max}
\icmltitlerunning{Model-based Reinforcement Learning for Continuous Control with Posterior Sampling}
\begin{document}

\twocolumn[
\icmltitle{Model-based Reinforcement Learning for Continuous Control with Posterior Sampling}



\icmlsetsymbol{equal}{*}

\begin{icmlauthorlist}
\icmlauthor{Ying Fan}{to}
\icmlauthor{Yifei Ming}{to}

\end{icmlauthorlist}

\icmlaffiliation{to}{University of Wisconsin-Madison}
\icmlcorrespondingauthor{Ying Fan}{yfan87@wisc.edu}
\icmlcorrespondingauthor{Yifei Ming}{ming5@wisc.edu}

\icmlkeywords{Machine Learning, ICML}

\vskip 0.3in
]



\printAffiliationsAndNotice{} 
\begin{abstract}
  Balancing exploration and exploitation is crucial in reinforcement learning (RL). In this paper, we study model-based posterior sampling for reinforcement learning (PSRL) in continuous state-action spaces theoretically and empirically. First, we show the first regret bound of PSRL in continuous spaces which is polynomial in the episode length to the best of our knowledge. With the assumption that reward and transition functions can be modeled by Bayesian linear regression, we develop a regret bound of $\tilde{O}(H^{3/2}d\sqrt{T})$, where $H$ is the episode length, $d$ is the dimension of the state-action space, and $T$ indicates the total time steps. This result matches the best-known regret bound of non-PSRL methods in linear MDPs. Our bound can be extended to nonlinear cases as well with feature embedding: using linear kernels on the feature representation $\phi$, the regret bound becomes $\tilde{O}(H^{3/2}d_{\phi}\sqrt{T})$, where $d_\phi$ is the dimension of the representation space. Moreover, we present MPC-PSRL, a model-based posterior sampling algorithm with model predictive control for action selection. To capture the uncertainty in models, we use Bayesian linear regression on the penultimate layer (the feature representation layer $\phi$) of neural networks. Empirical results show that our algorithm achieves the state-of-the-art sample efficiency in benchmark continuous control tasks compared to prior model-based algorithms, and matches the asymptotic performance of model-free algorithms. 
\end{abstract}
\section{Introduction}
\label{intro}
In reinforcement learning (RL), an agent interacts with an unknown environment which is typically modeled as a Markov Decision Process (MDP). Efficient exploration in RL has been one of the main challenges: the agent is expected to balance between {\it exploring} unseen state-action pairs to gain more knowledge about the environment, and {\it exploiting} existing knowledge to optimize rewards in the presence of known data. Specifically, when the state-action spaces are \emph{continuous}, function approximation is necessary to approximate the value function (in model-free settings) or the reward and transition functions (in model-based settings), which raises extra challenges for both computational and statistical efficiency compared to finite tabular cases\footnote{Tabular RL has been extensively studied with a regret bound of $\tilde{O}(H\sqrt{SAT})$, where $S$ and $A$ denote the number of states and actions respectively. However, in continuous state-action spaces $S$ and $A$ can be infinite, hence the above results do not apply to continuous spaces.}.


\paragraph{Frequentist Regrets with Upper Confidence Bound} Most existing works, which focus on using function approximations to achieve efficient exploration with performance guarantees, use algorithms based on Upper Confidence Bound (UCB) to develop frequentist regret bounds. In the model-based settings, the state-of-the-art frequentist bound is given by UC-MatrixRL \cite{yang2019reinforcement}, which achieves a regret bound of $\tilde{O}(H^2d\sqrt{T})$, where $H$ is the episode length, $d$ is the dimension of the state-action space, and $T$ indicates the total time steps.
In model-free settings, \citet{jin2020provably} proposed LSVI-UCB and developed a bound of $\tilde{O}(H^{3/2}d^{3/2}\sqrt{T})$. This bound is further improved to $\tilde{O}(H^{3/2}d\sqrt{T})$ by \citet{zanette2020learning} (model-free) and \citet{pmlr-v119-ayoub20a} (model-based), which achieves the best-known frequentist bound among model-free and model-based algorithms. All above bounds are achieved in linear MDPs where both reward and transition functions are modeled as linear functions. Those results can be extended to non-linear cases using kernel functions, replacing $d$ with $d_{\phi}$ where $d_{\phi}$ is the dimension of the feature space.
However, there are two main drawbacks of UCB-based methods. First, UCB requires optimizing over a confidence set, which is likely to be computationally prohibitive. Second, the lack of statistical efficiency can emerge from the sub-optimal construction of the confidence set \cite{osband17a}. Accordingly, this line of works mostly focuses on theoretical analysis rather than empirical applications. 

\paragraph{Bayesian Regrets with Posterior Sampling} Another line of works in Bayesian reinforcement learning treats MDP as a random variable with a prior distribution. 
This prior distribution of the MDP provides an initial uncertainty estimate of the environment, which generally contains distributions of transition dynamics and reward functions. The epistemic uncertainty (subjective uncertainty due to limited data) in reinforcement learning can be captured by posterior distributions given the data collected by the agent. Bayesian regrets naturally provide performance guarantees in this setting. 
Posterior sampling for reinforcement learning (PSRL), motivated by Thompson sampling in bandit problems  \citep{thompson1933likelihood}, serves as a provably efficient algorithm under Bayesian settings. In PSRL, the agent follows an optimal policy for a single MDP sampled from the posterior distribution for interaction in each episode, instead of optimizing over a confidence set, so PSRL is more computationally tractable than UCB-based methods.

Although PSRL with function approximation in continuous MDPs has been studied, to the best of our knowledge, there is no existing work that develops a regret bound which is \emph{clearly dependent in episode length $H$ with a polynomial order} while simultaneously sub-linear in $T$ as bounds developed via UCB. Existing results either provide no clear dependency on $H$ or suffer from an exponential order of $H$.

\paragraph{Limitations on the Order of H in PSRL with Function Approximation}

In model-based RL, \citet{osband2014model} derive a regret bound of  $\tilde{O}(\sigma_R\sqrt{d_K(R)d_E(R)T}+\mathbb{E}[L^*]\sigma_p\sqrt{d_K(P)d_E(P)})$ in their Corollary 1, where $L^*$ is a global Lipschitz constant for the future value function (see Section \ref{lips}), $d_K$ and $d_E$ are Kolmogorov and eluder dimensions, and $R$ and $P$ refers to function classes of rewards and transitions. However, $L^*$ is actually dependent on $H$ (see our remark in Section \ref{lips}).
Such dependency is not explored in their paper, and they didn't provide a clear dependency on $H$ in their Corollary 1. Moreover, they give a very loose bound (exponential in $H$) in their Corollary 2 for LQR, which we will discuss in detail in Section \ref{lqr}. 
\citet{chowdhury2019online} considers the regret bound for kernelized MDP which is sub-linear in $T$. However, they only mention that $L^*$ basically measure the connectedness of the MDP without discussing the dependency of $H$ in $L^*$ in continuous state-action spaces, and they followed \citet{osband2014model} in their Corollary 2 for LQR with an exponential order of $H$.
In model-free settings, \citet{azizzadenesheli2018efficient} develops a regret bound of $\tilde{O}(d_{\phi}\sqrt{T})$ using a linear function approximator in the Q-network, where $d_\phi$ is the dimension of the feature representation vector of the state-action space, but their bound is exponential in $H$ as mentioned in their paper.


Motivated by the drawbacks of previously discussed UCB-based and PSRL works, we are interested in the following question: in continuous MDPs, \emph{can PSRL achieve provably efficient exploration with polynomial orders of $d$ and $H$ in regret bounds, while still enjoy computational tractability of solving a single known MDP?}


\paragraph{Our Results}

In this paper, we study model-based PSRL in continuous state-action spaces. We assume that rewards and transitions can be modeled by Bayesian Linear regression \cite{rasmussen2003gaussian}, and extend the assumption to non-linear settings using feature representation. 

The key differences of our analysis compared to previous work in PSRL with function approximation are as follows: \textbf{First}, we show the order of $H$ can be polynomial in PSRL with continuous state-action spaces: 
in Section \ref{lips}, we use the property derived from any noise with a symmetric probability distribution, which includes many common noise assumptions, to derive a closed-form solution of the Lipschitz constant $L^*$ mentioned in \cite{osband2014model}. As a result, in Section \ref{lqr} we can develop a regret bound with polynomial dependency on $H$.
\textbf{Second}, our analysis requires less assumptions (especially compared to \citet{chowdhury2019online}): we omit their Lipschitz assumption (discussed in Section \ref{lips}) and regularity assumption (discussed in Section \ref{var}).
\textbf{Third}, our bound enjoys lower dimensionality (especially compared to \citet{osband2014model}) as discussed in Section  \ref{feature}.

To the best of our knowledge, we are the first to show that the regret bound for PSRL in continuous state-action spaces can be polynomial in the episode length $H$ and simultaneously sub-linear in $T$: For the linear case, we develop a Bayesian regret bound of $\tilde{O}(H^{3/2}d\sqrt{T})$. Using feature embedding, we derive a bound of $\tilde{O}(H^{3/2}d_{\phi}\sqrt{T})$. 
Our regret bound match the order of \emph{best-known} regret bound of UCB-based methods \citep{zanette2020learning,pmlr-v119-ayoub20a}, which is also $\tilde{O}(H^{3/2}d\sqrt{T})$.\footnote{We can compare them together in the Bayesian framework as discussed in \citet{osband17a}: A frequentist regret bound for a confidence set of MDPs implies an identical bound on the Bayesian regret for any prior distribution of MDPs with support on the same confidence set.} As far as is known, no previous works of UCRL have presented empirical results in continuous control. In contrast, PSRL only requires optimizing a \textbf{single} MDP, and thus enjoys more computationally tractability compared to UCB-based methods. 

Moreover, we implement PSRL with function approximation as a computationally tractable method with performance guarantee: We use Bayesian linear regression (BLR) \cite{rasmussen2003gaussian}  on the penultimate layer (for feature representation) of neural networks when fitting transition and reward models. We use model predictive control (MPC) \citep{camacho2013model}  as an approximate optimization solution of the sampled MDP, to optimize the policy under the sampled models in each episode as described in Section \ref{alg}. Experiments show that our algorithm achieves more efficient exploration compared with previous model-based algorithms in control benchmark tasks (see Section \ref{exp}).

\section{Preliminaries}

\subsection{Problem Formulation}
\label{formulation}
We model an episodic finite-horizon Markov Decision Process (MDP) $M$ as $\{\mathcal{S},\mathcal{A},R^{M},P^{M},H, \sigma_r, \sigma_f, R_{\text{max}},\rho\}$, where $\mathcal{S}\subset \mathbb{R}^{d_s}$ and $\mathcal{A}\subset \mathbb{R}^{d_a}$ denote state and action spaces, respectively.

Each episode with length $H$ has an initial state distribution $\rho$.  At time step $i \in [1,H]$ within an episode, the agent observes $s_i \in \mathcal{S}$, selects $a_i \in \mathcal{A}$, receives a noised reward  $r_i \sim R^M(s_{i},a_i)$ and transitions to a noised new state $s_{i+1} \sim P^M(\cdot|s_{i},a_i)$. More specifically, $r(s_{i},a_i) = \bar{r}^M(s_{i},a_i)+\epsilon_r$  
and $s_{i+1}=f^M(s_{i},a_i)+\epsilon_f$, where $\epsilon_r \sim \mathcal{N}(0,\sigma_r^2)$, $\epsilon_f \sim \mathcal{N}(0,\sigma_f^2I_{d_s})$. Variances $\sigma_r^2$ and $\sigma_f^2$ are fixed to control the noise level. Without loss of generality, we assume the expected reward an agent receives at a single step is bounded $|\bar{r}^M(s,a)| \leq R_{\text{max}}$, $\forall s\in \mathcal{S},a\in \mathcal{A}$. Let $\mu\colon\mathcal{S}\to \mathcal{A} $ be a deterministic policy. Define the value function for state $s$ at time step $i$ with policy $\mu$ as $V_{\mu,i}^M(s)=\mathbb{E}[\Sigma_{j=i}^{H}[\bar{r}^M(s_{j},a_{j})|s_i = s]$, where $s_{j+1}\sim P^M(\cdot|s_j,a_j)$ and $a_j=\mu(s_j)$. With the bounded expected reward, we have that  $|V(s)|\leq HR_{\text{max}}$, $\forall s$.

We use $M^*$  to indicate the real unknown MDP which includes $R^*$ and $P^*$, and $M^*$ itself is treated as a random variable. Thus, we can treat the real noiseless reward function $\bar{r}^*$ and transition function $f^*$ as random processes as well. In the posterior sampling algorithm $\pi^{PS}$,
$M^k$ is a random sample from the posterior distribution of the real unknown MDP $M^*$ in the $k$th episode, which includes the posterior samples of $R^k$ and $P^k$ , given history prior to the $k$th episode: $\mathcal{H}_{k} :=\{s_{1,1},a_{1,1},r_{1,1},\cdots,s_{k-1,H},a_{k-1,H},r_{k-1,H} \}$, where $s_{k,i}, a_{k, i}$ and $r_{k, i}$ indicate the state, action, and reward at time step $i$ in episode $k$. We define the the optimal policy under $M$ as $\mu^M(s_{i})\in \argmax_{\mu_2} V_{\mu_2,i}^M(s_{i})$. In particular, $\mu^*$ indicates the optimal policy under $M^*$ and $\mu^{k}$ represents the optimal policy under $M^{k}$. Define future value function:  $U_i^M(P)=\mathbb{E}_{s'\sim P(s')}[ V_{\mu^M,i+1}^M(s')]$. Let $\Delta_k$ denote the regret over the $k$th episode:
\begin{equation}
\Delta_k = \int \rho(s_1) (V_{\mu^*,1}^{M^*}(s_1)-V_{\mu^k, 1}^{M^*}(s_1)) ds_1
\end{equation}

Then we can express the regret of $\pi^{ps}$ up to time step T as:
\begin{equation}
Regret(T,\pi^{ps},M^*):=\Sigma_{k=1}^{\lceil\frac{T}{H}\rceil} \Delta_k,
\end{equation}

Let $BayesRegret(T,\pi^{ps}, \phi)$ denote the Beyesian regret of $\pi^{ps}$ as defined in \citet{osband17a}, where $\phi$ is the prior distribution of $M^*$:
\begin{equation}
    BayesRegret(T,\pi^{ps}, \phi) = \mathbb{E}[Regret(T,\pi^{ps},M^*\sim \phi)].
\end{equation}

\subsection{Gaussian Process Assumption}
\label{gp}

Generally, we consider modeling an unknown target function $g: \mathbb{R}^d\rightarrow\mathbb{R}$.
We are given a set of noisy samples $y =[y_1....,y_T]^T$ at points $  {X} = [ {x}_1,...,  {x}_T]^T$, $  {X} \subset D$, where $D$ is compact and convex, $y_i = g(x_i)+\epsilon_i$ with $\epsilon_i \sim N(0,\sigma^2)$ i.i.d. Gaussian noise $\forall i \in \{1,\cdots,T\}$.

We model $g$ as a sample from a Gaussian Process $GP(\mu({x}),\mathcal{K}({x},{x}'))$, specified by the mean function $\mu({x}) = \mathbb{E}[g({x})]$ and the covariance (kernel) function $\mathcal{K}({x},{x}')=\mathbb{E}[(g({x})-\mu({x})(g({x}')-\mu({x}')]$.

Let the prior distribution without any data as $GP({0},\mathcal{K}({x},{x}'))$.  Then the posterior distribution over $g$ given $X$ and $y$ is also a GP with mean $\mu_{T}(x)$, covariance $\mathcal{K}_{T}(x, x')$, and variance $\sigma_{T}^2(x)$ \cite{rasmussen2003gaussian}:
\begin{equation*}
\begin{split}
&\mu_{T}(x) = \mathcal{K}(x,X)(\mathcal{K}(X,X)+\sigma^2 {I})^{-1} {y},\\
&\mathcal{K}_{T}(  {x},  {x}') = \mathcal{K}(  {x},{x}')\\
&-\mathcal{K}(X,x)^T(\mathcal{K}(X,X)+\sigma^2I)^{-1}\mathcal{K}(X,x),\\
&\sigma^2_{T}({x}) = \mathcal{K}_{T}(  {x},  {x}),\\
\end{split}
\end{equation*}
where $\mathcal{K}(X,x) = [\mathcal{K}({x}_1,  {x}),...,\mathcal{K}(  {x}_T,  {x})]^T$,
$\mathcal{K}(X,X)=[\mathcal{K}(  {x_i},  {x_j})]_{1\leq i \leq T, 1\leq j \leq T}$.

We model our reward function $\bar{r}^M$ as a Gaussian Process with noise $\sigma^2_r$.
 
For transition models, we treat each dimension independently: each $f_i(s,a), i=1,..,d_S$ is modeled independently as above, and with the same noise level $\sigma^2_f$ in each dimension. Thus it corresponds to our formulation in the RL setting.
Since the posterior covariance matrix is only dependent on the input rather than the target value, the distribution of each $f_i(s, a)$ shares the same covariance matrice and only differs in the mean function.

\section{Bayesian Regret Analysis}
\label{analysis}
\subsection{Regret Decomposition}

In this section, we briefly describe the regret decomposition as presented in \citet{osband2014model} to facilitate further analysis.

The regret in episode $k$ can be rearranged as:
$\Delta_k= \int \rho(s_1)( V_{\mu^*,}^{M^*}(s_1)-V_{\mu^k,1}^{M^k}(s_1))+\tilde{\Delta}_k)ds_1,$  where $\tilde{\Delta}_k=V_{\mu^k,1}^{M^k}(s_1)-V_{\mu^k,1}^{M^*}(s_1)$.
In PSRL, $V_{\mu^*,}^{M^*}-V_{\mu^k,1}^{M^k}$ is zero in expectation, and thus we only need to bound $\tilde{\Delta}_k$ when deriving the Bayesian regret of PSRL.\footnote{It suffices to derive bounds for any initial state $s_1$ as the regret bound will still hold through the integration of the initial distribution $\rho(s_1)$.}
For clarity, the value function $V_{\mu^{k},1}^{M^k}$ is simplified to $V_{k,1}^{k}$ and $V_{\mu^{k},1}^{M^*}$ to $V_{k,1}^{*}$. Let $h_i$ indicate the state-action pair $s_{i},a_i$, and $h_i$ is the state-action pair that the agent encounters in the $k$th episode while using $\mu_k$ as policy in the real MDP $M^*$ \footnote{where $a_i = \mu_k(s_i), s_{i+1} \sim P^*(\cdot|s_i,a_i)$. }. 
Consider the regret from concentration via the Bellman operator (details of the derivation can be found in \cite{osband2014model}):
$$\mathbb{E}[\tilde{\Delta}_k|\mathcal{H}_{k}] =\mathbb{E}[\tilde{\Delta}_k(r)+\tilde{\Delta}_k(f)|\mathcal{H}_{k}],$$

where $\tilde{\Delta}_k(r) = \Sigma_{i=1}^H (\bar{r}^k(h_i)-\bar{r}^*(h_i)),$ and $\tilde{\Delta}_k(f) = \Sigma_{i=1}^H
(U_{i}^k(P^k(h_i))-U_{i}^k(P^*(h_i)))
.$

\subsection{Dependency on H in the Lipschitz Constant of Future Value Functions}
\label{lips}
First, we present a lemma on the property derived from noises with a symmetric probability distribution.

\begin{lemma}
\label{noise}
(Property derived from noises with symmetric probability distribution) Consider two zero-mean noises $\bm{\epsilon_1}, \bm{\epsilon_2} \in \mathbb{R}^d$, and the noise in each dimension of $\bm{\epsilon_1}, \bm{\epsilon_2}$ is i.i.d. drawn from the same symmetric probability distribution. Let $P_1, P_2$ be the probability distribution of the random variables $\bm{\mu_1+\epsilon_1}$ and $\bm{\mu_2+\epsilon_2}$ respectively, where $\bm{\mu_1,\mu_2} \in \mathbb{R}^d$.

Then we have 
$$||P_1-P_2|| \leq 
C
|| \bm{\mu}_1- \bm{\mu}_2||_2,$$
where $C$ is a constant that is only dependent on the variance of the noise.
\end{lemma}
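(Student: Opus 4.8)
The plan is to read $\|P_1-P_2\|$ as the total-variation (equivalently, $\tfrac12$ the $L^1$) distance between the two densities, since that is exactly the quantity that controls $|U_i^k(P^k(h_i))-U_i^k(P^*(h_i))|\le \|V\|_\infty\,\|P^k-P^*\|$ in the regret decomposition and hence feeds the Lipschitz constant $L^*$. Writing $p$ for the common one-dimensional density of the symmetric noise, the i.i.d.\ structure gives the product forms $P_1(x)=\prod_{j=1}^d p(x_j-\mu_{1,j})$ and $P_2(x)=\prod_{j=1}^d p(x_j-\mu_{2,j})$, so the whole statement is about the total variation between two product measures whose factors differ only by a translation.

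First I would reduce to one dimension. The cleanest route for the isotropic-Gaussian noise actually used in Section~\ref{gp} is to rotate coordinates so that $\mu_2-\mu_1$ points along $e_1$: the $d-1$ orthogonal coordinates then have identical marginals and cancel, leaving a single one-dimensional Gaussian shift of size $\|\mu_1-\mu_2\|_2$. For a general coordinatewise-symmetric $p$ one cannot rotate, but the subadditivity of total variation over product measures, $\|P_1-P_2\|\le \sum_{j=1}^d \|p(\cdot-\mu_{1,j})-p(\cdot-\mu_{2,j})\|$, serves the same purpose of collapsing the problem to $d$ one-dimensional shift comparisons.

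Next I would bound a single one-dimensional shifted-density distance linearly in the shift. Assuming $p$ absolutely continuous, the fundamental theorem of calculus writes $p(x)-p(x-\delta)=\int_0^\delta p'(x-\delta+s)\,ds$, and Fubini then gives $\int|p(x)-p(x-\delta)|\,dx\le |\delta|\int|p'(u)|\,du$; for a symmetric unimodal density $\int|p'|=2p(0)=2\sup_x p(x)$, a quantity fixed by the variance/scale of the noise. For the Gaussian this step can be made exact: the one-dimensional distance equals $\mathrm{erf}\!\big(\|\mu_1-\mu_2\|_2/(2\sqrt2\,\sigma)\big)$, which is at most $\|\mu_1-\mu_2\|_2/(\sqrt{2\pi}\,\sigma)$; equivalently, Pinsker together with the closed form $\mathrm{KL}(P_1\|P_2)=\|\mu_1-\mu_2\|_2^2/(2\sigma^2)$ yields $C=1/(2\sigma)$ directly.

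The step I expect to be the real obstacle is obtaining a constant $C$ that is free of the dimension $d$, as the statement demands. The coordinatewise argument only produces an $\ell_1$ control, $\sum_j|\mu_{1,j}-\mu_{2,j}|=\|\mu_1-\mu_2\|_1\le\sqrt d\,\|\mu_1-\mu_2\|_2$, which leaks a factor $\sqrt d$ into $C$. Removing that factor seems to genuinely require the spherical symmetry of the Gaussian --- either through the rotation that confines the whole discrepancy to one axis, or through the tensorization of the KL divergence combined with Pinsker --- so I would organize the final bound around the Gaussian case and treat general symmetric noise as the weaker version that carries the extra dimensional factor.
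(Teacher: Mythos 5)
Your proposal is correct, and in the Gaussian case it lands on essentially the same reduction the paper uses: rotate coordinates so that $\bm{\mu}_2-\bm{\mu}_1$ lies along a single axis, cancel the $d-1$ identical orthogonal marginals, and bound one one-dimensional shift. Your one-dimensional step, however, is genuinely different from the paper's: you write $p(x)-p(x-\delta)=\int_0^{\delta}p'(x-\delta+s)\,ds$ and bound the $L^1$ distance by $|\delta|\int|p'|=2p_{\max}|\delta|$ (which needs absolute continuity of $p$), whereas the paper exploits the crossing point $x=\tfrac{\mu_1+\mu_2}{2}$ forced by symmetry, splits into unimodal and non-unimodal cases, and bounds the one-sided integral by $(C_0+1)p_{\max}|\mu_2-\mu_1|$. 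Both routes yield the same linear-in-shift bound with a constant of the order of the density maximum, i.e.\ fixed by the noise scale, so this difference is stylistic rather than substantive.

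The substantive difference is your refusal to claim a dimension-free constant for general coordinatewise-symmetric noise, and your caution is warranted: it exposes a gap in the paper's own $d\geq 2$ argument. The paper rotates coordinates and asserts, in a footnote, that the rotated coordinates remain independent because the covariance matrix is still $\sigma^2\bm{I}_d$; but uncorrelatedness implies independence only for Gaussians, so the factorization into a single one-dimensional marginal fails for general symmetric noise (e.g.\ i.i.d.\ uniform noise on a square has diamond-shaped support after a $45^{\circ}$ rotation, which is not a product set). Indeed the dimension-free claim is false outside the Gaussian family: for i.i.d.\ uniform coordinates each shifted by $t$, the total variation is exactly $1-(1-ct)^d\approx cdt$ while $\|\bm{\mu}_1-\bm{\mu}_2\|_2=t\sqrt{d}$, forcing $C$ to grow like $\sqrt{d}$ --- precisely the $\ell_1$-versus-$\ell_2$ leak you identified in the subadditivity route. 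Since every downstream use of the lemma (Lemma 2 onward) takes $\epsilon_f\sim\mathcal{N}(0,\sigma_f^2 I_{d_s})$, your Gaussian-centered organization --- rotation, or KL tensorization plus Pinsker --- proves everything the paper actually needs, and is arguably the correct form of the statement.
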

The proof of Lemma \ref{noise} is in Appendix.  Using this Lemma, we can develop a closed-form upper bound of the Lipschitz constant of the future value function.
\begin{lemma} (The dependency of $H$ in the future value function) We have
\begin{equation}
\begin{split}
&U_{i}^k(P^k(h_i))-U_{i}^k(P^*(h_i))\\
&\leq CHR_{\text{max}}||f^k(h_i)-f^*(h_i)||_2,
\end{split}
\end{equation}
where $C$ is the constant mentioned in Lemma \ref{noise}.
\end{lemma}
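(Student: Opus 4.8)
The plan is to reduce the difference of the two future value functions to an $L_1$ (total-variation) distance between the two transition distributions, and then invoke Lemma~\ref{noise} to turn that distributional distance into the Euclidean distance between the transition means $f^k(h_i)$ and $f^*(h_i)$. The whole argument is the standard ``bounded value times transition distance'' estimate, with Lemma~\ref{noise} supplying the final conversion.

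First I would expand both terms through the definition $U_i^M(P)=\mathbb{E}_{s'\sim P(s')}[V_{\mu^M,i+1}^M(s')]$ and write the difference as a single integral against the \emph{same} integrand $V_{\mu^k,i+1}^k$ (the value function of the sampled MDP $M^k$ under its own optimal policy), which is precisely what lets the value function be factored out:
\begin{equation*}
\begin{split}
&U_i^k(P^k(h_i)) - U_i^k(P^*(h_i)) \\
&= \int V_{\mu^k,i+1}^k(s')\bigl(P^k(s'\mid h_i) - P^*(s'\mid h_i)\bigr)\,ds'.
\end{split}
\end{equation*}
Second, I would bound the integrand uniformly in sup-norm. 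Since the value-to-go from step $i+1$ sums at most $H$ single-step expected rewards, each bounded by $R_{\text{max}}$, we have $|V_{\mu^k,i+1}^k(s')|\leq HR_{\text{max}}$ for all $s'$, as recorded in the Preliminaries. Taking absolute values inside the integral and pulling out this uniform bound gives
\begin{equation*}
\begin{split}
&U_i^k(P^k(h_i)) - U_i^k(P^*(h_i)) \\
&\leq HR_{\text{max}}\int \bigl|P^k(s'\mid h_i) - P^*(s'\mid h_i)\bigr|\,ds' \\
&= HR_{\text{max}}\,\|P^k(h_i) - P^*(h_i)\|,
\end{split}
\end{equation*}
where $\|\cdot\|$ is the $L_1$ norm appearing in Lemma~\ref{noise}.

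Finally I would apply Lemma~\ref{noise} with $\bm{\mu}_1 = f^k(h_i)$, $\bm{\mu}_2 = f^*(h_i)$, and $\bm{\epsilon}_1,\bm{\epsilon}_2$ equal to the transition noise $\epsilon_f\sim\mathcal{N}(0,\sigma_f^2 I_{d_s})$. This noise is zero-mean, i.i.d.\ across coordinates, and symmetric, so the hypotheses of Lemma~\ref{noise} hold and it yields $\|P^k(h_i)-P^*(h_i)\|\leq C\,\|f^k(h_i)-f^*(h_i)\|_2$. Chaining the three displays then gives the claimed bound $CHR_{\text{max}}\|f^k(h_i)-f^*(h_i)\|_2$.

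The step I expect to require the most care is the second one: I must make sure that the norm in which Lemma~\ref{noise} is stated is exactly the quantity produced after sup-norm bounding the integrand (the $L_1$/total-variation distance between densities), so that the factor $HR_{\text{max}}$ and the constant $C$ compose cleanly without stray constants. The only genuine modeling input is that $P^k(\cdot\mid h_i)$ and $P^*(\cdot\mid h_i)$ are location families $f(h_i)+\epsilon_f$ sharing one symmetric noise law, which is precisely what makes Lemma~\ref{noise} applicable; everything else is elementary.
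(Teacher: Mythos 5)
Your proposal is correct and follows essentially the same route as the paper's own proof: bound the difference of future value functions by $\max_{s}|V_{k,i+1}^k(s)|\leq HR_{\text{max}}$ times the $L_1$ distance between $P^k(\cdot|h_i)$ and $P^*(\cdot|h_i)$, then invoke Lemma~\ref{noise} on the two Gaussian location families sharing the noise $\epsilon_f\sim\mathcal{N}(0,\sigma_f^2 I_{d_s})$ to obtain $C\|f^k(h_i)-f^*(h_i)\|_2$. The only difference is that you spell out the intermediate integral step explicitly, which the paper compresses into a single inequality.
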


\begin{proof}
For all $i$, we have
\begin{equation}
\begin{split}
&U_{i}^k(P^k(h_i))-U_{i}^k(P^*(h_i))\\
&\leq \max_{s}|V_{k,i+1}^k(s)| ||P^k(\cdot|h_i)-P^*(\cdot|h_i)||\\
&\leq HR_{\text{max}}||P^k(\cdot|h_i)-P^*(\cdot|h_i)||
\end{split}
\end{equation}

Recall that $P^k(s'|h_i)=\mathcal{N}(f^k(h_i),\sigma_f^2 {I})$ and $P^*(s'|h_i)=\mathcal{N}(f^*(h_i),\sigma_f^2 {I})$. By Lemma \ref{noise} we have
\begin{equation}
    ||P^k(\cdot|h_i)-P^*(\cdot|h_i)||\leq C||f^k(h_i)-f^*(h_i)||_2,
\end{equation} 
And the proof will be complete by combining (4) and (6).
\end{proof}

\paragraph{Remark} Lemma \ref{noise} is crucial for developing a bound which is polynomially dependent on $H$ in Section \ref{lqr}. Here $CHR_{\text{max}}$ serves as the Lipschitz constant in the Lipschitz assumption of the future value function in \citet{osband2014model} . In fact, the Lipschitz constant here is naturally dependent on $H$: when $i=1$, the future value function includes the cumulative rewards within $H$ steps. As the distance between two initial states propagates in $H$ steps (and result in differences in the rewards), the resulting difference in the future value function is naturally dependent on $H$. However, \citet{osband2014model} did not explore such dependency and directly assume the Lipschitz continuity of the future value function their Corollary 1 (presented in Section \ref{intro}). In contrast, we present the Lipschitz continuity of the future value function as a result of the property of noises and provide its dependency on $H$.

\subsection{Connecting Regret with Posterior Variances}
\label{var}
In this section, we show the upper bound of $\Sigma_{k=1}^{[\frac{T}{H}]}\tilde{\Delta}_k(f)$ conditioned on any given history $\mathcal{H}_k$ with high probability.
\begin{lemma} (Upper bound by the sum of posterior variances)
With probability at least $1-\delta$, 
\begin{equation}
\begin{split}
&\Sigma_{k=1}^{[\frac{T}{H}]}[\tilde{\Delta}_k(f)|\mathcal{H}_{k}] \\
&\leq \Sigma_{k=1}^{[\frac{T}{H}]}4CH^2R_{\text{max}}\sqrt{2d_s\sigma^2_k(h_{\text{kmax}})log\frac{4Td_s}{\delta}}.\\
\end{split}
\end{equation}

\end{lemma}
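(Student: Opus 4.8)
The plan is to reduce the claim to a concentration statement about posterior samples and then apply Gaussian tail bounds with a union bound. First I would combine the preceding lemma with the definition $\tilde{\Delta}_k(f) = \sum_{i=1}^H (U_i^k(P^k(h_i)) - U_i^k(P^*(h_i)))$ to obtain the deterministic reduction $\tilde{\Delta}_k(f) \leq CHR_{\text{max}} \sum_{i=1}^H \|f^k(h_i) - f^*(h_i)\|_2$. It then remains to control, with high probability, the per-step transition errors $\|f^k(h_i) - f^*(h_i)\|_2$ in terms of the posterior standard deviation.

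The key structural observation is the defining property of PSRL: conditioned on the history $\mathcal{H}_k$, the sampled model $f^k$ and the true model $f^*$ are i.i.d. draws from the same posterior Gaussian process. Since each state coordinate is modeled as an independent GP sharing the common posterior variance $\sigma^2_k(\cdot)$ (Section \ref{gp}), every coordinate difference $f^k_j(h_i) - f^*_j(h_i)$ is a zero-mean Gaussian with variance $2\sigma^2_k(h_i)$. I would then apply a standard Gaussian tail bound to each coordinate and take a union bound over the $d_s$ coordinates, the $H$ steps per episode, and all episodes---a total of $O(Td_s)$ events---setting the per-event failure probability to $\delta/(Td_s)$. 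This produces the $\log(4Td_s/\delta)$ factor and yields, simultaneously for all $k$ and $i$, a bound of the form $\|f^k(h_i) - f^*(h_i)\|_2 \lesssim \sqrt{d_s \sigma^2_k(h_i) \log(4Td_s/\delta)}$, where the $\sqrt{d_s}$ arises from summing the squared coordinate deviations.

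To finish, I would upper bound each posterior variance $\sigma^2_k(h_i)$ by its episode maximum $\sigma^2_k(h_{\text{kmax}})$, sum the resulting per-step bound over the $H$ steps (contributing one factor of $H$), and fold in the $CHR_{\text{max}}$ prefactor to reach the stated $4CH^2R_{\text{max}}\sqrt{2 d_s \sigma^2_k(h_{\text{kmax}}) \log(4Td_s/\delta)}$; the explicit constant is deliberately conservative and absorbs the slack from the tail bound together with the variance-$2\sigma^2_k$ accounting for the difference of two independent samples.

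The main obstacle I anticipate is measure-theoretic rather than computational: the query points $h_i$ are not fixed but are generated sequentially along the trajectory, so they are correlated with the very samples $f^k, f^*$ whose deviation we wish to bound, and a naive union bound over fixed points is not immediately justified. Making the concentration rigorous requires either a uniform-over-domain deviation bound for posterior GP samples (controlling $|f(x) - \mu_k(x)|/\sigma_k(x)$ simultaneously over $D$ via a covering argument, whose metric-entropy contribution is absorbed into the logarithmic factor) or a sequential filtration argument in which $h_i$ is measurable given the intra-episode history before $f^*(h_i)$ and $f^k(h_i)$ are revealed. Care is also needed to ensure the maximizer $h_{\text{kmax}}$ is well defined and that a single high-probability event supports the bound across all episodes at once.
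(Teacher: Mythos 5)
Your proposal follows essentially the same route as the paper's proof: reduce via Lemma 2 to the per-step errors $\|f^k(h_i)-f^*(h_i)\|_2$, apply pointwise Gaussian concentration (you bound the difference $f^k-f^*$ directly as a zero-mean Gaussian with variance $2\sigma^2_k(h_i)$, while the paper passes through the posterior mean $\bar{f}^k$ with a sub-Gaussian maximal inequality and the triangle inequality---a cosmetic difference), take a union bound over the $d_s$ coordinates, the $H$ steps, and the $[\frac{T}{H}]$ episodes to get the $\log\frac{4Td_s}{\delta}$ factor, and replace each $\sigma^2_k(h_i)$ by the episode maximum $\sigma^2_k(h_{\text{kmax}})$. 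The measure-theoretic subtlety you flag (the visited pairs $h_i$ are adaptive rather than fixed) is genuine but is not resolved in the paper either: its proof applies the pointwise bound ``for any state-action pair $h$'' to the encountered pairs and union bounds over them, exactly as your sketch does.
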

\begin{proof}
Given history $\mathcal{H}_{k}$, let $\bar{f}^k(h)$ indicate the posterior mean of $f^k(h)$ in episode $k$, and $\sigma^2_k(h)$ denotes the posterior variance of $f^k$ in each dimension. Note that $f^*$ and $f^k$ share the same variance in each dimension given history $\mathcal{H}_{k}$, as described in Section 3. Consider all dimensions of the state space, We have that for $N$ sub-Gaussian random variables: $X_1, . . . ,X_N$  with variance $\sigma^2$ (not required to be independent), and for any $t>0$, $\mathbb{P}(\max_{1\leq i\leq N}|X_i|>t)\leq 2N e^{-\frac{t^2}{2\sigma^2}}.$ \citep{rigollet2015high}. So with probability at least $1-\delta$, for any state-action pair $h$, $\max_{1\leq i \leq d_s}|f^k_i(h)-\bar{f}^k_i(h)|\leq \sqrt{2\sigma^2_k(h)\log\frac{2d_s}{\delta}}.$ Also, we can derive an upper bound for the norm of the state difference $||f^k(h)-\bar{f}^k(h)||_2 \leq \sqrt{d_s} \max_{1\leq i \leq d_s}|f^k_i(h)-\bar{f}^k_i(h)|$,
and so does $||f^*(h)-\bar{f}^k(h)||_2$ since $f^*$ and $f^k$ share the same posterior distribution. By the union bound, we have that with probability at least $1-2\delta$,  $||f^k(h)-f^*(h)||_2\leq 2\sqrt{2d_s\sigma^2_k(h)\log\frac{2d_s}{\delta}}$.

Then we look at the sum of the differences over horizon $H$, without requiring each variable in the sum to be independent: 
\begin{equation}
\begin{split}
&\mathbb{P}(\Sigma_{i=1}^H||f^k(h_i)-f^*(h_i)||_2> \Sigma_{i=1}^H 2\sqrt{2d_s\sigma^2_k(h_i)\log\frac{2d_s}{\delta}})\\
&\leq  \mathbb{P}(\bigcup\limits_{i=1}^{H}\{||f^k(h_i)-f^*(h_i)||_2> 2\sqrt{2d_s\sigma^2_k(h_i)\log\frac{2d_s}{\delta}}\})\\
&\leq \Sigma_{i=1}^H \mathbb{P}(||f^k(h_i)-f^*(h_i)||_2> 2\sqrt{2d_s\sigma^2_k(h_i)\log\frac{2d_s}{\delta}})
\end{split}
\end{equation}
Thus, 
we have that with probability $1-\delta$, 
\begin{equation}
\begin{split}
&\Sigma_{i=1}^H||f^k(h_i)-f^*(h_i)||_2 \\
&\leq \Sigma_{i=1}^H 2\sqrt{2d_s\sigma^2_k(h_i)\log\frac{4Hd_s}{\delta}}\\ 
&\leq 2H\sqrt{2d_s\sigma^2_k(h_{\text{kmax}})\log\frac{4Hd_s}{\delta}},
\end{split}
\end{equation}
where we define the index: ${\text{kmax}}:=\argmax_{i}\sigma_k(h_i), i=1,...,H$ in episode $k$. Here, since the posterior distribution is only updated every H steps, we have to use data points with the max variance in each episode to bound the result. 
Similarly, using the union bound for $[\frac{T}{H}]$ episodes, we have that with probability at least $1-\delta$, 
\begin{equation*}
\begin{split}
&\Sigma_{k=1}^{[\frac{T}{H}]}[\tilde{\Delta}_k(f)|\mathcal{H}_{k}]\\
&\leq \Sigma_{k=1}^{[\frac{T}{H}]} \Sigma_{i=1}^H2CHR_{\text{max}}||f^k(h_i)-f^*(h_i)||_2\\ 
&\leq \Sigma_{k=1}^{[\frac{T}{H}]}4CH^2R_{\text{max}}\sqrt{2d_s\sigma^2_k(h_{\text{kmax}})\log\frac{4Td_s}{\delta}}.
\end{split}
\end{equation*}

\end{proof}
\paragraph{Remark}  Here we bound $\tilde{\Delta}_k(f)$ using \emph{point-wise} concentration properties of rewards and transitions that applies to any state-action pair. Then we use the union bound on each state-action pair that the agent encounters in every episode. In contrast, \citet{chowdhury2019online} use the uniform concentration on the reward and transition functions, which requires an extra assumption (their regularity assumption of the RKHS norm) compared to our analysis.

\subsection{Regret with Linear Kernels}
\label{lqr}
\begin{theorem}
In the RL problem formulated in Section \ref{formulation}, under the assumption of Section \ref{gp} with linear kernels \footnote{GP with linear kernel correspond to Bayesian linear regression $f(x) = w^Tx$, where the prior distribution of the weight is $w \sim \mathcal{N}(0,\Sigma_p)$ \cite{rasmussen2003gaussian} .}, we have  $BayesRegret(T,\pi^{ps}, \phi)  =\tilde{O}(H^{3/2}d\sqrt{T})$, where $d$ is the dimension of the state-action space, $H$ is the episode length, and $T$ is the time elapsed.
\end{theorem}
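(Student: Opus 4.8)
The plan is to assemble the theorem from the decomposition already built up in Sections~3.1--3.3, supplying the two pieces that remain: a reward analogue of Lemma~4 and, above all, a bound on the accumulated posterior variances under the linear kernel. First I would invoke the defining property of posterior sampling: conditioned on the history $\mathcal{H}_k$, the sampled MDP $M^k$ and the true MDP $M^*$ are identically distributed, so $\mathbb{E}[V_{\mu^*,1}^{M^*}(s_1)-V_{k,1}^{k}(s_1)\mid\mathcal{H}_k]=0$. Combining this with the Bellman decomposition of Section~3.1 reduces the Bayesian regret to
\begin{equation*}
BayesRegret(T,\pi^{ps},\phi)=\mathbb{E}\Big[\sum_{k=1}^{\lceil T/H\rceil}\big(\tilde{\Delta}_k(r)+\tilde{\Delta}_k(f)\big)\Big],
\end{equation*}
so it suffices to control these two sums with high probability and then take expectations.

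For the transition term I would apply Lemma~4 verbatim, giving, with probability at least $1-\delta$, the per-episode bound $\tilde{\Delta}_k(f)\lesssim H^2R_{\text{max}}\sqrt{d_s\,\sigma_k^2(h_{\text{kmax}})\log(Td_s/\delta)}$. For the reward term I would rerun the point-wise sub-Gaussian concentration argument of Lemma~4, but now in a single dimension and without the value-function amplification $HR_{\text{max}}$, since the reward enters $\tilde{\Delta}_k(r)=\sum_i(\bar{r}^k(h_i)-\bar{r}^*(h_i))$ directly rather than through the $H$-step propagation of the future value; this yields $\tilde{\Delta}_k(r)\lesssim H\sqrt{\sigma_{r,k}^2(h_{\text{kmax}})\log(\cdot)}$, which will prove to be of strictly lower order in both $H$ and $d$.

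The hard part will be bounding $\sum_k\sigma_k(h_{\text{kmax}})$, and this is where the linear kernel and the dimension $d$ enter. I would first apply Cauchy--Schwarz across the $\lceil T/H\rceil$ episodes, $\sum_k\sigma_k(h_{\text{kmax}})\le\sqrt{\lceil T/H\rceil}\,\big(\sum_k\sigma_k^2(h_{\text{kmax}})\big)^{1/2}$, and then control $\sum_k\sigma_k^2(h_{\text{kmax}})$ by the maximum information gain. The subtlety is that the posterior is refreshed only once per episode, so $\sigma_k^2(\cdot)$ is computed from the lagged data $\mathcal{H}_k$. I would dispose of the lag by a monotonicity argument rather than a crude delay factor: the full-episode posterior at episode $k$ has conditioned on a superset of the subsequence $\{h_{k'\text{max}}\}_{k'<k}$, so its variance at $h_{\text{kmax}}$ is no larger than the variance of the process that observes only these max-variance points. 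The elliptical-potential (log-determinant) lemma then bounds $\sum_k\sigma_k^2(h_{\text{kmax}})$ by $C\gamma_{\lceil T/H\rceil}$, where $\gamma$ is the GP information gain; for the linear kernel the Gram-matrix determinant bound gives $\gamma=O(d\log T)$, which is precisely the source of the factor $d$.

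Finally I would combine the pieces. The transition term contributes $H^2\sqrt{d_s\log T}\cdot\sqrt{T/H}\cdot\sqrt{d\log T}=\tilde{O}(H^{3/2}\sqrt{d_sd}\,\sqrt{T})=\tilde{O}(H^{3/2}d\sqrt{T})$ using $d_s\le d$, while the reward term is only $\tilde{O}(\sqrt{HdT})$ and is absorbed. To pass from the high-probability statement to the expectation defining $BayesRegret$, I would set $\delta=1/T$, so that the failure event contributes at most $O(\delta\,T\,HR_{\text{max}})=\tilde{O}(HR_{\text{max}})$, which is lower order. This yields $BayesRegret(T,\pi^{ps},\phi)=\tilde{O}(H^{3/2}d\sqrt{T})$. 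I expect the variance-summation step, including the correct handling of the once-per-episode update, to be the only genuinely delicate point; the rest is concentration and bookkeeping that the earlier lemmas have already set up.
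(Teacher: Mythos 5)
Your proposal is correct and follows essentially the same route as the paper's proof: the PSRL/Bellman decomposition, the high-probability per-episode bound from the posterior-variance lemma, Cauchy--Schwarz across episodes, the monotonicity argument that replaces the lagged full-history posterior by the posterior conditioned only on the max-variance subsequence $\{h_{k'\text{max}}\}_{k'<k}$, the $O(d\log T)$ information-gain bound for linear kernels, and the $\delta = 1/T$ conversion to expectation. The only cosmetic difference is that you invoke the elliptical-potential/log-determinant lemma where the paper cites Theorem~5 of \citet{srinivas2012} and Lemma~2 of \citet{russo2014learning}, which is the same mathematical content.
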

\begin{proof}
In each episode $k$, let $\sigma_k^{'2}(h)$ denote the posterior variance given only a subset of data points $\{h_{\text{1max}},...,h_{\text{(k-1)max}}\}$, where each element has the max variance in the corresponding episode. By Eq.(6) in \citet{williams2000upper},  we know that the posterior variance reduces as the number of data points grows. Hence $\forall h, \sigma_k^2(h)\leq \sigma_k^{'2}(h)$.
By Theorem 5 in \citet{srinivas2012} which provides a bound on the information gain, and Lemma 2 in \citet{russo2014learning} that bounds the sum of variances by the information gain, we have that  $\Sigma_{k=1}^{[\frac{T}{H}]} \sigma_k^{'2}(h_{\text{kmax}}) = \mathcal{O} ((d_s+d_a)\log[\frac{T}{H}])$ for linear kernels with bounded variances (See Appendix for details).

Thus with probability $1-\delta$, and let $\delta = \frac{1}{T}$,
\begin{equation}
\begin{split}
&\Sigma_{k=1}^{[\frac{T}{H}]}[\tilde{\Delta}_k(f)|\mathcal{H}_{k}]\\ 
&\leq \Sigma_{k=1}^{[\frac{T}{H}]}4CH^2R_{\text{max}}\sqrt{2d_s\sigma^2_k(h_{\text{kmax}})\log\frac{4Td_s}{\delta}}\\
&\leq 8CH^2R_{\text{max}}\sqrt{\Sigma_{k=1}^{[\frac{T}{H}]}\sigma_k^{'2}(h_{\text{kmax}})}\sqrt{[\frac{T}{H}]}\sqrt{d_s \log(2Td_s)}\\
&\leq 8CH^{\frac{3}{2}}R_{\text{max}}\sqrt{T} \sqrt{d_s \log(2Td_s)} \sqrt{\mathcal{O} ((d_s+d_a)\log(T))}\\
&=\tilde{\mathcal{O}}((d_s+d_a)H^{\frac{3}{2}}\sqrt{T})
\end{split}  
\end{equation}
where $\tilde{\mathcal{O}}$ ignores logarithmic factors of $T$.

Therefore, 
\begin{equation}
\begin{split}
&\mathbb{E}[\Sigma_{k=1}^{[\frac{T}{H}]}\tilde{\Delta}_k(f)|\mathcal{H}_{k}]\\
&\leq (1-\frac{1}{T}) \tilde{\mathcal{O}}((d_s+s_a)H^{\frac{3}{2}}T)+\frac{1}{T} 2HR_{\text{max}} [\frac{T}{H}]\\
&= \tilde{\mathcal{O}}(H^{\frac{3}{2}}d\sqrt{T}),
\end{split}
\end{equation} where $2HR_{\text{max}}$ is the upper bound on the difference of value functions, and $d = d_s+d_a$. 
Following similar derivation,  $\mathbb{E}[\Sigma_{k=1}^{[\frac{T}{H}]}\tilde{\Delta}_k(r)|\mathcal{H}_{k}]= \tilde{\mathcal{O}}(\sqrt{dHT})$ (See Appendix for details). Finally, through the tower property we have $BayesRegret(T,\pi^{ps}, M^*)=\tilde{\mathcal{O}}(H^{\frac{3}{2}}d\sqrt{T})$.
\end{proof}

\paragraph{Remark} Here we compare our result with Corollary 2 in \cite{osband2014model}. Note that we maintain the same assumptions of transitions and rewards as \cite{osband2014model}. However, in their Corollary 2 which describes the regret for LQR, they directly use the Lipschitz constant of the \emph{underlying value function}, instead of the future value function. The Lipschitz constant of the underlying function in LQR is actually \emph{exponential} in $H$; as a result, even if the reward is linear, their bound would still be exponential in $H$ (See Appendix for details), while we present a regret bound polynomial in $H$. So their Corollary 2 is very loose and can be improved by our analysis.

\subsection{Nonlinear Extension via Feature Representation}
\label{feature}
We can slightly modify the previous proof to derive the bound in settings that use feature representations. 
Consider the mapping: $(s,a)\rightarrow s$ in the transition model. 
We transform the state-action pair $(s,a)$ to $\phi_f(s,a)\in \mathbb{R}^{d_{\phi}}$ as the input of the transition model, and transform the target $s'$ to $\psi_f(s')\in \mathbb{R}^{d_{\psi}}$ , then this transition model can be established with respect to this feature embedding. We further assume $d_{\psi} = O(d_{\phi})$ 
. Besides, we assume $d_{\phi'}=O(d_{\phi})$ in the feature representation $\phi_r(s,a)\in \mathbb{R}^{d_{\phi'}}$, then the reward model can also be established with respect to the feature embedding. In this way, we can handle non-linear rewards and transitions with only linear kernels in GP. Following similar steps in previous analysis will lead to a Bayesian regret of $\tilde{O}(H^{3/2}d_{\phi}\sqrt{T})$.

Empirically, the linearity of rewards and transitions can be preserved by updating the feature representation. By updating representations of all state-action pairs in the history and the covariance correspondingly, the theoretical extension to nonlinear cases still holds in practice.

\paragraph{Remark} 

The eluder dimension of neural networks in \citet{osband2014model} can blow up to infinity, and the information gain used in \citet{chowdhury2019online} yields exponential order of dimension $d$ if nonlinear kernels are used, such as SE and Matérn kernels.
But linear kernel can only model linear functions, thus the representation power is restricted if the polynomial order of $d$ is desired in their result. 
We first derive results for linear kernels, and increase the representation power by extracting the penultimate layer of neural networks, and thus we can derive a bound linear in the dimension of the penultimate layer, which is generally much less than the exponential order of the input dimension of neural networks.

\section{Algorithm Description}
\label{alg}
In this section, we elaborate our proposed algorithm, MPC-PSRL, as shown in Algorithm \ref{alg:MPC-PSRL}. 

\begin{algorithm}[tb]
  \caption{MPC-PSRL}
  \label{alg:MPC-PSRL}
\begin{algorithmic}
  \STATE Initialize data $\mathcal{D}$ with random actions for one episode
  \REPEAT 
  \STATE Sample a transition model and a cost model at the beginning of each episode
  \FOR{$i=1$ to $H$ steps}
  \STATE Obtain action using MPC with planning horizon $\tau$: $a_i \in \arg\max_{a_{i:i+\tau}} \sum_{t=i}^{i+\tau} \mathbb{E}[r(s_t, a_t)]$
  \STATE $\mathcal{D} = \mathcal{D} \cup \lbrace (s_i, a_i, r_i, s_{i+1}) \rbrace$
  \ENDFOR
  \STATE Train cost and dynamics representations $\phi_r$ and $\phi_f$ using data in $\mathcal{D}$
  \STATE Update $\phi_r(s,a)$, $\phi_f(s,a)$ for all $(s,a)$ collected
  \STATE Perform posterior update of $w_r$ and $w_f$ in cost and dynamics models using updated representations $\phi_r(s,a)$, $\phi_f(s,a)$ for all $(s,a)$ collected

  \UNTIL{convergence}
\end{algorithmic}
\end{algorithm}

\subsection{Predictive Model}
\label{model}
When modeling the rewards and transitions, we use features extracted from the penultimate layer of fitted neural networks, and perform Bayesian linear regression on the feature vectors to update posterior distributions.

\begin{figure*}[h!]
\makebox[\textwidth][c]
{\includegraphics[width=0.7\textwidth]{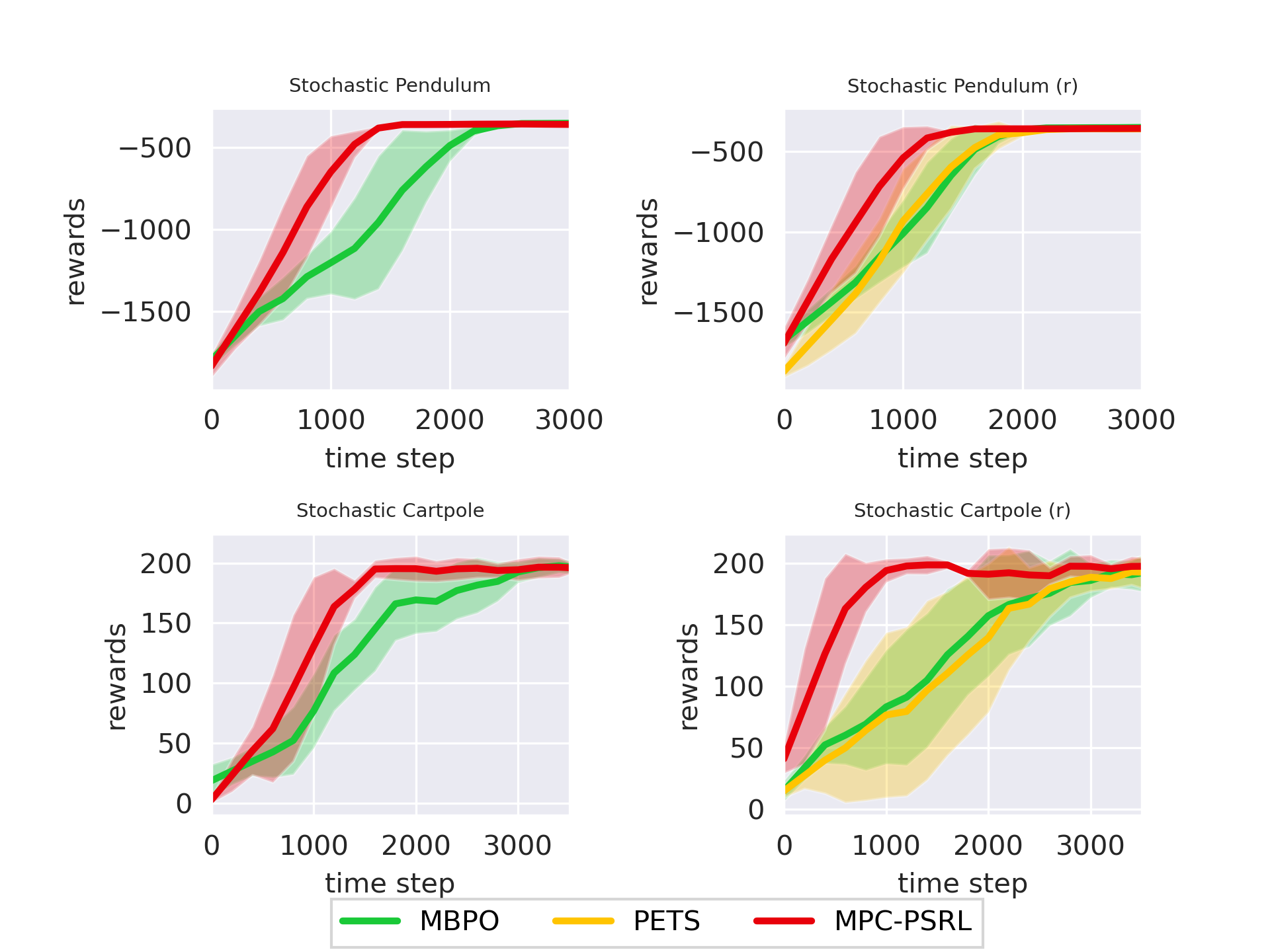}}

\caption{Training curves of MPC-PSRL (shown in red), and other model-based baseline algorithms in stochastic tasks. Solid curves are the mean of five trials, shaded areas correspond to the standard deviation among trials. (r) means with oracle rewards provided.}
\label{fig_new}
\end{figure*}

\textbf{Feature representation:} 
we first fit neural networks for transitions and rewards, using the same network architecture as \citet{chua2018deep}.  Let $x_i$ denote the state-action pair $(h_i)$ and $y_i$ denote the target value. Specifically, we use reward $r_i$ as $y_i$ to fit rewards, and we take the difference between two consecutive states $s_{i+1} - s_{i}$ as $y_i$ to fit transitions. 
The penultimate layer of fitted neural networks is extracted as the feature representation, denoted as $\phi_f$ and $\phi_r$ for transitions and rewards, respectively. 
Note that in the transition feature embedding, we only use one neural network to extract features of state-action pairs from the penultimate layer to serve as $\phi$, and leave the target states without further feature representation (the general setting is discussed in Section \ref{feature} where feature representations are used for both inputs and outputs), so the dimension of the target in the transition model $d_{\psi}$ equals to $d_s$. Thus we have a modified regret bound of $\tilde{O}(H^{3/2}\sqrt{dd_{\phi}T})$.  We do not find the necessity to further extract feature representations in the target space, as it might introduce additional computational overhead. Although higher dimensionality of the hidden layers might imply better representation, we find that only modifying the width of the penultimate layer to be the same order of $d=d_s+s_a$ suffices in our experiments for both reward and transition models. Note that how to optimize the dimension of the penultimate layer for more efficient feature representation deserves further exploration.

\textbf{Bayesian update and posterior sampling:}
here we describe the Bayesian update of transition and reward models using extracted features. Recall that Gaussian process with linear kernels is equivalent to Bayesian linear regression. By extracting the penultimate layer as feature representation $\phi$, the target value $y$ and the representation $\phi(x)$ could be seen as linearly related: $y = w\top\phi(x)+\epsilon$, where $\epsilon$ is a zero-mean Gaussian noise with variance $\sigma^2$ (which is $\sigma^2_f$ for the transition model and $\sigma^2_r$ for the reward model as defined in Section \ref{formulation}). We choose the prior distribution of weights $w$ as zero-mean Gaussian with covariance matrix $\Sigma_p$, then the posterior distribution of $w$ is also multivariate Gaussian (\citet{rasmussen2003gaussian}): $$p(w|\mathcal{D}) \sim \mathcal{N}\left( \sigma^{-2} A^{-1}\Phi  {Y}, A^{-1}\right) $$
where $A = \sigma^{-2}\Phi\Phi^\top + \Sigma_p^{-1}$, $\Phi\in\mathcal{R}^{d_{\phi}\times N}$ is the concatenation of feature representations $\lbrace\phi(x_i)\rbrace_{i=1}^{N}$, and $  {Y}\in\mathcal{R}^{N}$ is the concatenation of target values. At the beginning of each episode, we sample $w$ from the posterior distribution to build the model, collect new data during the whole episode, and update the posterior distribution of $w$ at the end of the episode using all the data collected. Here we present the complexity for posterior sampling: The matrix multiplication for covariance matrix $A$  is $O(d_{\phi}^2N)$; The inverse of $A$ is $O(d_{\phi}^3)$.

Besides the posterior distribution of $w$, the feature representation $\phi$ is also updated in each episode with new data collected. We adopt a similar dual-update procedure as \citet{riquelme2018deep}: after representations for rewards and transitions are updated, feature vectors of all state-action pairs collected are re-computed. Then we apply Bayesian update on these feature vectors. See the description of Algorithm \ref{alg:MPC-PSRL} for details.

\subsection{Planning}

During interaction with the environment, we use a MPC controller (\citet{camacho2013model}) for planning. At each time step $i$, the controller takes state $s_i$ and an action sequence $a_{i:i+\tau} = \lbrace a_i, a_{i+1}, \cdots, a_{i+\tau} \rbrace $ as the input, where $\tau$ is the planning horizon. We use transition and reward models to produce the first action $a_i$ of the sequence of optimized actions $\arg\max_{a_{i:i+\tau}} \sum_{t = i} ^{i + \tau} \mathbb{E}[r(s_t, a_t)]$, where the expected return of a series of actions can be approximated using the mean return of several particles propagated with noises of our sampled reward and transition models. To compute the optimal action sequence, we use CEM (\citet{botev2013cross}), which samples actions from a distribution closer to previous action samples with high rewards. 

\begin{figure*}[h!]
\makebox[\textwidth][c]
{\includegraphics[width=1.01\textwidth]{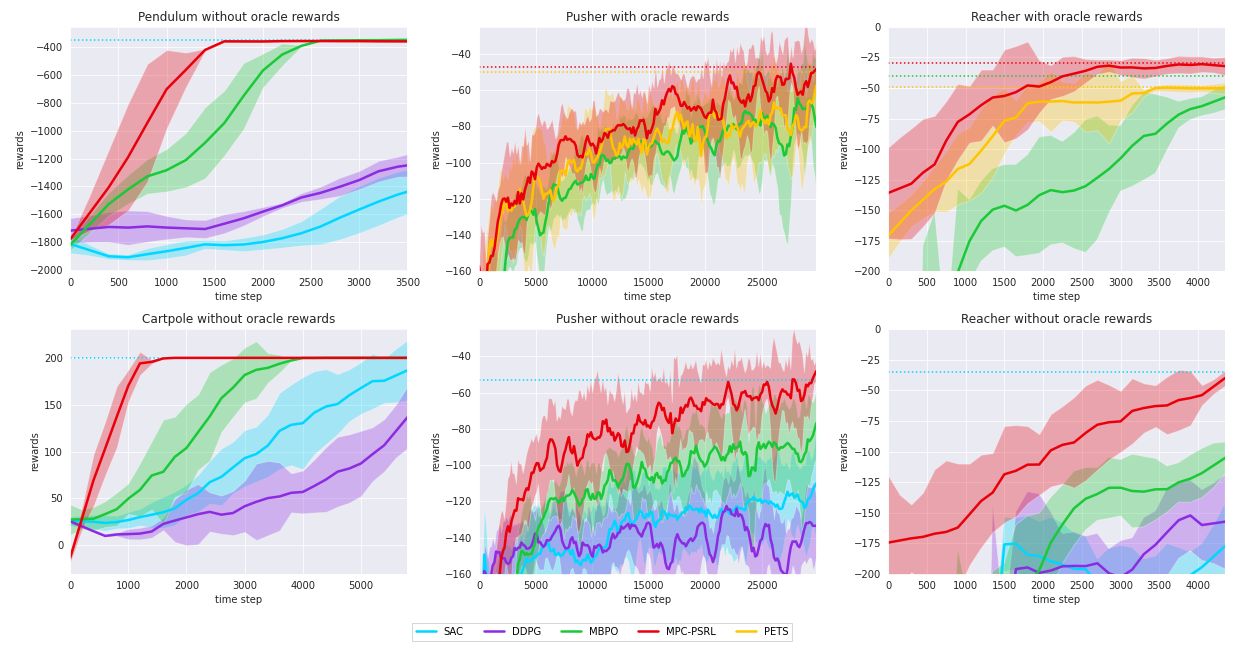}}

\caption{Training curves of MPC-PSRL (shown in red), and other baseline algorithms in different tasks. Solid curves are the mean of five trials, shaded areas correspond to the standard deviation among trials, and the dotted line shows the rewards at convergence.}
\label{fig}
\end{figure*}

\section{Experiments}
\label{exp}

\subsection{Baselines}
We compare our method with the following state-of-the-art model-based and model-free algorithms on benchmark control tasks. 

\textbf{Model-free:}
Soft Actor-Critic (SAC) from \citet{haarnoja2018soft} is an off-policy deep actor-critic algorithm that utilizes entropy maximization to guide exploration. Deep Deterministic Policy Gradient (DDPG) from \citet{barth2018distributed} is an off-policy algorithm that concurrently learns a Q-function and a policy, with a discount factor to guide exploration.

\textbf{Model-based:} Probabilistic Ensembles with Trajectory Sampling (PETS) from \citet{chua2018deep} models the dynamics via an ensemble of probabilistic neural networks to capture epistemic uncertainty for exploration, and uses MPC for action selection, with a requirement to have access to oracle rewards for planning. Model-Based Policy Optimization (MBPO) from \citet{janner2019trust} uses the same bootstrap ensemble techniques as PETS in modeling, but differs from PETS in policy optimization with a large amount of short model-generated rollouts, and can cope with environments with no oracle rewards provided.
%
We do not compare with \citet{gal2016improving}, which adopts a single Bayesian neural network (BNN) with moment matching, as it is outperformed by PETS that uses an ensemble of BNNs with trajectory sampling. 
And we don't compare with GP-based trajectory optimization methods with real rewards provided (\citet{deisenroth2011pilco}, \citet{kamthe2018data}), which are not only outperformed by PETS, but also computationally expensive and thus are limited to very small state-action spaces.

\subsection{Environments and Results}
We use environments with various complexity and dimensionality for evaluation:

Low-dimensional stochastic environments: continuous Cartpole ($d_s = 4$, $d_a = 1,H=200$, with a continuous action space compared to the classic Cartpole, which makes it harder to learn) and Pendulum Swing Up ($d_s = 3$, $d_a = 1,H=200$, a modified version of Pendulum where we limit the start state to make it harder for exploration). The transitions and rewards are originally deterministic in these environments, so we first modify the physics in Cartpole and Pendulum so that the transitions are stochastic with independent Gaussian noises ($\mathcal{N}(0,0.01)$). We also use noises in the same form for stochastic rewards. The learning curves of model-based algorithms are shown in Figure \ref{fig_new}, which shows our algorithm significantly outperforms model-based baselines in these stochastic environments.

Higher-dimensional environments: 7-DOF Reacher ($d_s=17,d_a=7,H=150$) and 7-DOF pusher ($d_s=20,d_a=7,H=150$) are two more challenging tasks as provided in \cite{chua2018deep}, where we conduct experiments both with and without true rewards, to compare with all baseline algorithms mentioned. 

The learning curves of all compared algorithms are shown in Figure \ref{fig}, and the hyperparameters and other experimental settings in our experiments are provided in Appendix. Here we also included results from deterministic Cartpole and Pendulum without oracle rewards. When their trajectories are deterministic, optimization with oracle rewards in these two environments becomes very easy and there is no significant difference in the performances for all model-based algorithms we compare, so we omit those learning curves in Figure \ref{fig}. However, when we add noise to the trajectory, these environments become harder to learn even when the rewards are provided, and we can observe the difference in the performance of different algorithms as in Figure \ref{fig_new}.

When the oracle rewards are provided in Pusher and Reacher, our method outperforms PETS and MBPO: it converges more quickly with similar performance at convergence in Pusher, while in Reacher, not only does it learn faster but also performs better at convergence. As we use the same planning method (MPC) as PETS, results indicate that our model better captures the uncertainty, which is beneficial to improving sample efficiency. When exploring in environments where both rewards and transition are unknown, our method significantly outperforms previous model-based and model-free methods which do no require oracle rewards. Meanwhile, it matches the performance of SAC at convergence. Convergence results are provided in Appendix.

\subsection{Discussion}
In our experiment, we have shown that our model-based algorithm outperforms \citet{chua2018deep} and \citet{janner2019trust} in given environments. 
Notice that \citet{chua2018deep}, \citet{janner2019trust} have already greatly outperformed state-of-the-art model-free methods in sample efficiency as shown in their papers. Generally, model-based methods enjoy a significant advantage in sample efficiency over model-free methods. So we can safely expect that our algorithm can also outperform other model-free methods with exploration trick like \citet{tang2017exploration}, \citet{azizzadenesheli2018} and \citet{bellemare2016unifying}.

From the experimental results, it can be verified that our algorithm better captures the model uncertainty, and makes better use of uncertainty using posterior sampling. In our methods, by sampling from a Bayesian linear regression on a fitted feature space, and optimizing under the same sampled MDP in the whole episode instead of re-sampling at every step, the performance of our algorithm is guaranteed from a Bayesian view as analyzed in Section \ref{analysis}. While PETS and MBPO use bootstrapped ensembles of models with a limited ensemble size to "simulate" a Bayesian model, in which the convergence of the uncertainty is not guaranteed and is highly dependent on the training of the neural network. 
However, in our method, there is a limitation of using MPC, which might fail in even higher-dimensional tasks as shown in \citet{janner2019trust}. Incorporating policy gradient techniques for action-selection might further improve the performance and we leave it for future work.

\section{Conclusion}
In our paper, we show that the regret for PSRL algorithm with function approximation can be polynomial in $d, H$ with the assumption that true rewards and transitions (with or without feature embedding) can be modeled by GP with linear kernels. While matching the order of best-known bounds in UCB-based works, PSRL also enjoys computational tractability compared to UCB methods. Moreover, we propose MPC-PSRL in continuous environments, and experiments show that our algorithm exceeds existing model-based and model-free methods with more efficient exploration.

\bibliography{example_paper}

\begin{thebibliography}{32}
\providecommand{\natexlab}[1]{#1}
\providecommand{\url}[1]{\texttt{#1}}
\expandafter\ifx\csname urlstyle\endcsname\relax
  \providecommand{\doi}[1]{doi: #1}\else
  \providecommand{\doi}{doi: \begingroup \urlstyle{rm}\Url}\fi

\bibitem[Ayoub et~al.(2020)Ayoub, Jia, Szepesvari, Wang, and
  Yang]{pmlr-v119-ayoub20a}
Ayoub, A., Jia, Z., Szepesvari, C., Wang, M., and Yang, L.
\newblock Model-based reinforcement learning with value-targeted regression.
\newblock In III, H.~D. and Singh, A. (eds.), \emph{Proceedings of the 37th
  International Conference on Machine Learning}, volume 119 of
  \emph{Proceedings of Machine Learning Research}, pp.\  463--474. PMLR, 13--18
  Jul 2020.
\newblock URL \url{http://proceedings.mlr.press/v119/ayoub20a.html}.

\bibitem[Azar et~al.(2017)Azar, Osband, and Munos]{azar2017minimax}
Azar, M.~G., Osband, I., and Munos, R.
\newblock Minimax regret bounds for reinforcement learning.
\newblock \emph{arXiv preprint arXiv:1703.05449}, 2017.

\bibitem[Azizzadenesheli et~al.(2018)Azizzadenesheli, Brunskill, and
  Anandkumar]{azizzadenesheli2018efficient}
Azizzadenesheli, K., Brunskill, E., and Anandkumar, A.
\newblock Efficient exploration through bayesian deep q-networks.
\newblock In \emph{2018 Information Theory and Applications Workshop (ITA)},
  pp.\  1--9. IEEE, 2018.

\bibitem[Barth-Maron et~al.(2018)Barth-Maron, Hoffman, Budden, Dabney, Horgan,
  Tb, Muldal, Heess, and Lillicrap]{barth2018distributed}
Barth-Maron, G., Hoffman, M.~W., Budden, D., Dabney, W., Horgan, D., Tb, D.,
  Muldal, A., Heess, N., and Lillicrap, T.
\newblock Distributed distributional deterministic policy gradients.
\newblock \emph{arXiv preprint arXiv:1804.08617}, 2018.

\bibitem[Bellemare et~al.(2016)Bellemare, Srinivasan, Ostrovski, Schaul,
  Saxton, and Munos]{bellemare2016unifying}
Bellemare, M.~G., Srinivasan, S., Ostrovski, G., Schaul, T., Saxton, D., and
  Munos, R.
\newblock Unifying count-based exploration and intrinsic motivation.
\newblock \emph{arXiv preprint arXiv:1606.01868}, 2016.

\bibitem[Botev et~al.(2013)Botev, Kroese, Rubinstein, and
  L’Ecuyer]{botev2013cross}
Botev, Z.~I., Kroese, D.~P., Rubinstein, R.~Y., and L’Ecuyer, P.
\newblock The cross-entropy method for optimization.
\newblock In \emph{Handbook of statistics}, volume~31, pp.\  35--59. Elsevier,
  2013.

\bibitem[Camacho \& Alba(2013)Camacho and Alba]{camacho2013model}
Camacho, E.~F. and Alba, C.~B.
\newblock \emph{Model predictive control}.
\newblock Springer Science \& Business Media, 2013.

\bibitem[Chowdhury \& Gopalan(2019)Chowdhury and Gopalan]{chowdhury2019online}
Chowdhury, S.~R. and Gopalan, A.
\newblock Online learning in kernelized markov decision processes.
\newblock In \emph{The 22nd International Conference on Artificial Intelligence
  and Statistics}, pp.\  3197--3205, 2019.

\bibitem[Chua et~al.(2018)Chua, Calandra, McAllister, and Levine]{chua2018deep}
Chua, K., Calandra, R., McAllister, R., and Levine, S.
\newblock Deep reinforcement learning in a handful of trials using
  probabilistic dynamics models.
\newblock In \emph{Advances in Neural Information Processing Systems}, pp.\
  4754--4765, 2018.

\bibitem[Deisenroth \& Rasmussen(2011)Deisenroth and
  Rasmussen]{deisenroth2011pilco}
Deisenroth, M. and Rasmussen, C.~E.
\newblock Pilco: A model-based and data-efficient approach to policy search.
\newblock In \emph{Proceedings of the 28th International Conference on machine
  learning (ICML-11)}, pp.\  465--472, 2011.

\bibitem[Gal et~al.(2016)Gal, McAllister, and Rasmussen]{gal2016improving}
Gal, Y., McAllister, R., and Rasmussen, C.~E.
\newblock Improving pilco with bayesian neural network dynamics models.
\newblock In \emph{Data-Efficient Machine Learning workshop, ICML}, volume~4,
  pp.\ ~34, 2016.

\bibitem[Haarnoja et~al.(2018)Haarnoja, Zhou, Abbeel, and
  Levine]{haarnoja2018soft}
Haarnoja, T., Zhou, A., Abbeel, P., and Levine, S.
\newblock Soft actor-critic: Off-policy maximum entropy deep reinforcement
  learning with a stochastic actor.
\newblock \emph{arXiv preprint arXiv:1801.01290}, 2018.

\bibitem[Jaksch et~al.(2010)Jaksch, Ortner, and Auer]{jaksch2010near}
Jaksch, T., Ortner, R., and Auer, P.
\newblock Near-optimal regret bounds for reinforcement learning.
\newblock \emph{Journal of Machine Learning Research}, 11\penalty0
  (Apr):\penalty0 1563--1600, 2010.

\bibitem[Janner et~al.(2019)Janner, Fu, Zhang, and Levine]{janner2019trust}
Janner, M., Fu, J., Zhang, M., and Levine, S.
\newblock When to trust your model: Model-based policy optimization.
\newblock In \emph{Advances in Neural Information Processing Systems}, pp.\
  12498--12509, 2019.

\bibitem[Jin et~al.(2018)Jin, Allen-Zhu, Bubeck, and Jordan]{jin2018q}
Jin, C., Allen-Zhu, Z., Bubeck, S., and Jordan, M.~I.
\newblock Is q-learning provably efficient?
\newblock In \emph{Advances in Neural Information Processing Systems}, pp.\
  4863--4873, 2018.

\bibitem[Jin et~al.(2020)Jin, Yang, Wang, and Jordan]{jin2020provably}
Jin, C., Yang, Z., Wang, Z., and Jordan, M.~I.
\newblock Provably efficient reinforcement learning with linear function
  approximation.
\newblock In \emph{Conference on Learning Theory}, pp.\  2137--2143, 2020.

\bibitem[Kamthe \& Deisenroth(2018)Kamthe and Deisenroth]{kamthe2018data}
Kamthe, S. and Deisenroth, M.
\newblock Data-efficient reinforcement learning with probabilistic model
  predictive control.
\newblock In \emph{International Conference on Artificial Intelligence and
  Statistics}, pp.\  1701--1710. PMLR, 2018.

\bibitem[Kamyar~Azizzadenesheli(2018)]{azizzadenesheli2018}
Kamyar~Azizzadenesheli, Emma~Brunskill, A.~A.
\newblock Efficient exploration through bayesian deep q-networks, 2018.

\bibitem[Osband \& Van~Roy(2014)Osband and Van~Roy]{osband2014model}
Osband, I. and Van~Roy, B.
\newblock Model-based reinforcement learning and the eluder dimension.
\newblock In \emph{Advances in Neural Information Processing Systems}, pp.\
  1466--1474, 2014.

\bibitem[Osband \& Van~Roy(2017)Osband and Van~Roy]{osband17a}
Osband, I. and Van~Roy, B.
\newblock Why is posterior sampling better than optimism for reinforcement
  learning?
\newblock In Precup, D. and Teh, Y.~W. (eds.), \emph{Proceedings of the 34th
  International Conference on Machine Learning}, pp.\  2701--2710,
  International Convention Centre, Sydney, Australia, 2017. PMLR.

\bibitem[Osband et~al.(2013)Osband, Benjamin, and Daniel]{osband2013}
Osband, I., Benjamin, V.~R., and Daniel, R.
\newblock ({M}ore) efficient reinforcement learning via posterior sampling.
\newblock In \emph{Proceedings of the 26th International Conference on Neural
  Information Processing Systems - Volume 2}, NIPS'13, pp.\  3003--3011, USA,
  2013. Curran Associates Inc.

\bibitem[Osband et~al.(2019)Osband, Van~Roy, Russo, and Wen]{osband2019deep}
Osband, I., Van~Roy, B., Russo, D.~J., and Wen, Z.
\newblock Deep exploration via randomized value functions.
\newblock \emph{Journal of Machine Learning Research}, 20\penalty0
  (124):\penalty0 1--62, 2019.

\bibitem[Rasmussen(2003)]{rasmussen2003gaussian}
Rasmussen, C.~E.
\newblock Gaussian processes in machine learning.
\newblock In \emph{Summer School on Machine Learning}, pp.\  63--71. Springer,
  2003.

\bibitem[Rigollet \& H{\"u}tter(2015)Rigollet and H{\"u}tter]{rigollet2015high}
Rigollet, P. and H{\"u}tter, J.-C.
\newblock High dimensional statistics.
\newblock \emph{Lecture notes for course 18S997}, 2015.

\bibitem[Riquelme et~al.(2018)Riquelme, Tucker, and Snoek]{riquelme2018deep}
Riquelme, C., Tucker, G., and Snoek, J.
\newblock Deep bayesian bandits showdown: An empirical comparison of bayesian
  deep networks for thompson sampling.
\newblock \emph{arXiv preprint arXiv:1802.09127}, 2018.

\bibitem[Russo \& Van~Roy(2014)Russo and Van~Roy]{russo2014learning}
Russo, D. and Van~Roy, B.
\newblock Learning to optimize via posterior sampling.
\newblock \emph{Mathematics of Operations Research}, 39\penalty0 (4):\penalty0
  1221--1243, 2014.

\bibitem[Srinivas et~al.(2012)Srinivas, Krause, Kakade, and
  Seeger]{srinivas2012}
Srinivas, N., Krause, A., Kakade, S.~M., and Seeger, M.~W.
\newblock Information-theoretic regret bounds for gaussian process optimization
  in the bandit setting.
\newblock \emph{IEEE Transactions on Information Theory}, 58\penalty0
  (5):\penalty0 3250--3265, 2012.

\bibitem[Tang et~al.(2017)Tang, Houthooft, Foote, Stooke, Chen, Duan, Schulman,
  De~Turck, and Abbeel]{tang2017exploration}
Tang, H., Houthooft, R., Foote, D., Stooke, A., Chen, X., Duan, Y., Schulman,
  J., De~Turck, F., and Abbeel, P.
\newblock \# exploration: A study of count-based exploration for deep
  reinforcement learning.
\newblock In \emph{31st Conference on Neural Information Processing Systems
  (NIPS)}, volume~30, pp.\  1--18, 2017.

\bibitem[Thompson(1933)]{thompson1933likelihood}
Thompson, W.~R.
\newblock On the likelihood that one unknown probability exceeds another in
  view of the evidence of two samples.
\newblock \emph{Biometrika}, 25\penalty0 (3/4):\penalty0 285--294, 1933.

\bibitem[Williams \& Vivarelli(2000)Williams and Vivarelli]{williams2000upper}
Williams, C.~K. and Vivarelli, F.
\newblock Upper and lower bounds on the learning curve for gaussian processes.
\newblock \emph{Machine Learning}, 40\penalty0 (1):\penalty0 77--102, 2000.

\bibitem[Yang \& Wang(2019)Yang and Wang]{yang2019reinforcement}
Yang, L.~F. and Wang, M.
\newblock Reinforcement learning in feature space: Matrix bandit, kernels, and
  regret bound.
\newblock \emph{arXiv preprint arXiv:1905.10389}, 2019.

\bibitem[Zanette et~al.(2020)Zanette, Lazaric, Kochenderfer, and
  Brunskill]{zanette2020learning}
Zanette, A., Lazaric, A., Kochenderfer, M., and Brunskill, E.
\newblock Learning near optimal policies with low inherent bellman error.
\newblock \emph{arXiv preprint arXiv:2003.00153}, 2020.

\end{thebibliography}
\bibliographystyle{icml2021}
\nocite{jaksch2010near}
\nocite{azar2017minimax}
\nocite{jin2018q}
\nocite{osband2013}  
\nocite{osband2019deep}
\clearpage
\onecolumn
\icmltitle{Supplemental Materials: Model-based Reinforcement Learning for Continuous Control with Posterior Sampling}
\appendix

\section{Proof of Lemma 1}
\label{appendix}
\paragraph{We first prove the results in $\mathbb{R}^d$ when $d=1$:}

Let $p_1(x)$, $p_2(x) $ be the probability density functions for $P_1,P_2$ respectively.  By the property of symmetric distribution, we have that $p_1(\mu_1 + x) = p_1(\mu_1-x)$ and $p_2(\mu_2 + x) = p_2(\mu_2-x)$ for any $x\in \mathbb{R}^d$. Let $\mu_2>\mu_1$ without loss of generality.

Since $\bm{\epsilon_1}$ and $\bm{\epsilon_1}$ share the same distribution, we have that $p_1(x)=p_2(x+\mu_2-\mu_1)$, $\forall x\in\mathbb{R}^d$.


Note that $p_1(x) = p_2(x)$ at $x =\frac{\mu_1 + \mu_2}{2} $. Thus the total variation difference between $p_1$ and $p_2$ can be simplified as twice the integration of one side due to symmetry:
\begin{equation}
\begin{split}
&\int_{-\infty}^{\infty}|p_2(x) - p_1(x)| dx 
=  \int_{-\infty}^{ \frac{\mu_1 + \mu_2}{2}}|p_2(x) - p_1(x)|dx +\int_{ \frac{\mu_1 + \mu_2}{2}}^{\infty} |p_2(x) - p_1(x)| dx =2\int_{ \frac{\mu_1 + \mu_2}{2}}^{\infty}|p_2(x) - p_1(x)|dx,
\end{split}
\end{equation}
where the last equation come from 
\begin{equation}
\begin{split}
&p_1(\frac{\mu_1 + \mu_2}{2}-x)=p_1(\mu_1-x+\frac{\mu_2-\mu_1}{2})=p_1(\mu_1+x-\frac{\mu_2-\mu_1}{2})\\
&=p_2(\mu_2+x-\frac{\mu_2-\mu_1}{2})=p_2(\frac{\mu_1 + \mu_2}{2}+x).
\end{split}
\end{equation}

Case 1: If the density functions $p_1$ and $p_2$ are unimodal, we have $p_2(x)>p_1(x)$ when $x>\frac{\mu_1 + \mu_2}{2}$.
Let $z_1 = x - \mu_1, z_2 = x - \mu_2$, we have:
\begin{equation}
\begin{split}
& \int_{ \frac{\mu_1 + \mu_2}{2}}^{\infty} |p_2(x) - p_1(x)| dx \\
&= \int_{ \frac{\mu_1 - \mu_2}{2}}^{\infty}p_2(z_2)dz_2 -  \int_{ \frac{\mu_2 - \mu_1}{2}}^{\infty} p_1(z_1)dz_1\\
&=\int_{ \frac{\mu_1 - \mu_2}{2}}^{\frac{\mu_2 - \mu_1}{2}} p_2(z_2)dz_2\\
& \leq  \int_{ \frac{\mu_1 - \mu_2}{2}}^{\frac{\mu_2 - \mu_1}{2}} p_{max} dz= p_{max}|\mu_2 - \mu_1|,\\ 
\end{split}
\end{equation}

where $p_{max}$ is the maximum probiblity density of $p_2$, which is dependent on the variance of the shared noise distribution. The proof is completed by combing (12) and (14).

Case 2: When $p_1(x),p_2(x)$ are not unimodal, there exist $C_0$ such that $p_2(x)$ would be a descreasing function in $x$ when $x>\frac{\mu_1 + \mu_2}{2}+C_0(\mu_2-\mu_1)$ (otherwise the integration of the density cannot be 1, and $C_0$ is a constant which is dependent on the specific distribution). 
Recall that $p_1(x)=p_2(x+\mu_2-\mu_1)$, so when $x>\frac{\mu_1 + \mu_2}{2}+C_0(\mu_2-\mu_1)$,  $p_2(x)>p_2(x+\mu_2-\mu_1)=p_1(x)$. Let $z_1 = x - \mu_1, z_2 = x - \mu_2$, we have

\begin{equation}
\begin{split}
  & \int_{ \frac{\mu_1 + \mu_2}{2}}^{\infty} |p_2(x) - p_1(x)| dx \\
  &=\int_{ \frac{\mu_1 + \mu_2}{2}}^{\frac{\mu_1 + \mu_2}{2}+C_0(\mu_2-\mu_1)} |p_2(x) - p_1(x)| dx +\int_{ \frac{\mu_1 + \mu_2}{2}+C_0(\mu_2-\mu_1)}^{\infty} |p_2(x) - p_1(x)|dx\\
  &\leq C_0p_{max}(\mu_2-\mu_1) + \int_{ \frac{\mu_1 - \mu_2}{2}+C_0(\mu_2-\mu_1)}^{\infty}p_2(z_2)dz_2 -  \int_{ \frac{\mu_2 - \mu_1}{2}+C_0(\mu_2-\mu_1)}^{\infty} p_1(z_1)dz_1\\\
  & \leq C_0p_{max}(\mu_2-\mu_1) + \int_{ \frac{\mu_1 - \mu_2}{2}+C_0(\mu_2-\mu_1)}^{\frac{\mu_2 - \mu_1}{2}+C_0(\mu_2-\mu_1)} p_{max} dz= (C_0+1)p_{max}|\mu_2 - \mu_1|,\\ 
\end{split}
\end{equation}
then the proof is complete by combining (12) and (15).
\paragraph{Now we extend the result to 
$\mathbb{R}^d (d \geq 2)$:}

Let the shared covariance matrix for the overall noise distribution be $\sigma^2\bm{I}_d$, where the noise in each dimension is drawn independently with variance $\sigma^2$.
We can rotate the coordinate system recursively to align the last axis with vector $ \bm{\mu_1} -  \bm{\mu_2}$, such that the coordinates of $ \bm{\mu_1}$ and $ \bm{\mu_2}$ can be written as $(0, 0, \cdots, 0, \hat{\mu}_1)$, and $(0, 0, \cdots,0, \hat{\mu}_2)$ respectively, with $|\hat{\mu}_2-\hat{\mu}_1| = \left\lVert \bm{ \mu_2-  \mu_1} \right\rVert_2$. 

The new covariance matrix after rotation will still be $\sigma^2 \bm{I}_d$ since the rotation matrix is orthogonal. Notice that rotation is a linear transformation on the original noises, and the original noises are independently drawn from each axis, so the new covariance matrix indicates the noises in each new axis (after rotation) can also be viewed as independent\footnote{If noises in each new axis are not independent, they can only be linearly related, which would result in non-zero covariance and causes contradiction.}. Without loss of generality, let $\bm{\hat{\mu}_1} \geq\bm{ \hat{\mu}_2}$. Using $p'_1,p'_2$ to indicate the marginal probability density in d-th dimension, we have:

\begin{equation}
\begin{split}
&\int_{-\infty}^{\infty}\int_{-\infty}^{\infty} \cdots \int_{-\infty}^{\infty} |p_2( \bm{x}) - p_1( \bm{x})| dx_1dx_2 \cdots dx_d\\
&=\int_{-\infty}^{\infty}|p'_2(x_d)-p'_1(x_d)|dx_d\\
\end{split}
\end{equation}
Then we can follow the same steps in $\mathbb{R}^1$ to finish the proof.

\paragraph{Remark} For Gaussian noises with shared covariance $\sigma^2\bm{I}_d$,  Pinsker's inequality and the KL-divergence of two Gaussian distributions can also show that $\int |p_1( \bm{x})-p_2( \bm{x})|d \bm{x} \leq\frac{1}{\sigma} ||\bm{\mu}_1- \bm{\mu}_2||_2$. But our upper bound for Gaussian noises is tighter: we have $\int |p_1( \bm{x})-p_2( \bm{x})|d \bm{x} \leq \sqrt{\frac{2}{\pi \sigma^2}}|| \bm{\mu}_1- \bm{\mu}_2||_2$. Also here we develop upper bounds for a wide class of symmetric distributions.



\section{Detailed comparison with previous works in Section 3.4}
Here we compare our result with Corollary 2 in \cite{osband2014model}.
In their Corollary 2 of linear quadratic systems, the regret bound is $\tilde{O}(\sigma C\lambda_1n^2\sqrt{T})$, where $\lambda_1$ is the largest eigenvalue of the matrix $Q$ in the optimal value function $V_1(s) = s^TQs$, where $V_1$ denotes the value function counting from step 1 to H within an episode, $s$ is the initial state, reward at the $i$-th step $r_{i} = s_i^TPs_i+a_i^TRa_i + \epsilon_{P,i}$, and the state at the  $i+1$-th step $s_{i+1} =  As_i+Ba_i + \epsilon_{P,i}$ , $i\in [H]$. However, the largest eigenvalue of $Q$ is actually exponential in $H$: Recall the Bellman equation we have $V_i(s_i) = \min_{a_i} \mathbb{E}[s_i^TPs_i + a_i^TRa_i + \epsilon_{P,i} +V_{i+1}(As_i+Ba_i+\epsilon_{P,i})]$, $V_{H+1}(s)=0$. Thus in $V_1(s)$, we can observe a term of $(A^{H-1}s)^TP(A^{H-1}s)$, and the eigenvalue of the matrix $(A^{H-1})^TPA^{H-1}$ is exponential in $H$. 

Even if we change the reward function from quadratic to linear, say $r_{i} = s_i^TP+a_i^TR + \epsilon_{P,i}$ the Lipschitz constant of the optimal value function is still exponential in $H$ since there is still a term of $(A^{H-1}s)^TP$ in $V_1(s)$. \citet{chowdhury2019online} maintain the assumption of this Lipschitz property, thus there exists $\mathbb{E}[L^*]$ in their bound. As a result, there is still no clear dependency on $H$ in their regret, and in their Corollary 2 of LQR, they follow the same steps as \citet{osband2014model}, and still maintain a term with $\lambda_1$, which is actually exponential in $H$ as discussed. Although \citet{osband2014model} mention that system noise helps to smooth future values, but they do not explore it although the noise is assumed to be subgaussian. The authors directly use the Lipschitz continuity of the underlying function in the analysis of LQR, thus they have an exponential bound on $H$ which is very loose, and it can be improved by our analysis. \cite{chowdhury2019online} do not explore how the system noise can improve the theoretical bound either. 

\section{Details for bounding the sum of posterior variances in Section 3.4}

Here we slightly modify the Proof of Lemma 5.4 in \cite{srinivas2012} and show that $\Sigma_{i=1}^{n} \sigma_i^{2}(h_{i}) = \mathcal{O} ((d_s+d_a)\log(n))$, then we can write $\Sigma_{k=1}^{[\frac{T}{H}]} \sigma_k^{'2}(h_{\text{kmax}}) = \mathcal{O} ((d_s+d_a)\log[\frac{T}{H}])$ with just changes of notations.

For any $s^2\in[0,\sigma_f^{-2} C_1]$ we have $s^2\leq C_2\log(1+s^2)$, where $C_2 = \frac{\sigma_f^{-2}C_1}{\log(1+\sigma_f^{-2}C_1)}$. We treat $C_1$ as the upper bound of the variance (note that the bounded variance property for linear kernels only requires the range of all state-action pairs actually encountered in $M^*$ not to expand to infinity as T grows, which holds in general episodic MDPs).

Lemma 5.3 in \cite{srinivas2012} shows that the information gain for dataset $\{h_1,...,h_n\}$ is equal to $\frac{1}{2}\Sigma_{i=1}^n log(1+\sigma_f^{-2}\sigma_i^{2}(h_{i}))$, and we also have $\sigma_i^{2}(h_{i})\leq C_2\log(1+\sigma_i^{2}(h_{i}))$. Thus we can use the upper bound of the information gain of linear kernels, which is $\mathcal{O} ((d_s+d_a)\log(n))$ as presented in Theorem 5 in \cite{srinivas2012}, to upper bound the sum of posterior variances.

\section{Handling $\mathbb{E}[\Sigma_{k=1}^{[\frac{T}{H}]}\tilde{\Delta}_k(r)|\mathcal{H}_{k}]$ in Section 3.4}

Recall that $\tilde{\Delta}_k(r) = \Sigma_{i=1}^H (\bar{r}^k(h_i)-\bar{r}^*(h_i))$, so we can omit section 3.2 and directly follow steps in section 3.3 to derive another upper bound which is similar to Lemma 3: $[\Sigma_{k=1}^{[\frac{T}{H}]}\tilde{\Delta}_k(r)|\mathcal{H}_{k}]\leq \Sigma_{k=1}^{[\frac{T}{H}]}2\sqrt{\sigma_k(h_{\text{kmax}})\log(\frac{4T}{\delta})}$  with probability at least $1-\delta$. 

Then we can follow similar steps in section 3.4 to develop that  $\mathbb{E}[\Sigma_{k=1}^{[\frac{T}{H}]}\tilde{\Delta}_k(r)|\mathcal{H}_{k}]= \tilde{\mathcal{O}}(\sqrt{dHT})$.



\section{Experimental details}

For model-based baselines, the average number of episodes required for convergence is presented below (convergence results for Cartpole and Pendulum can be found in Figure 1 and Figure 2 in the main paper):
\begin{table}[h]
    \centering
    \vspace{-0.3cm}

\begin{tabular}{l|cccc}
\toprule
Method & Reacher(r) & Pusher(r) &  Reacher & Pusher\\  \midrule
 Ours &\textbf{20.2}&\textbf{146.6}&\textbf{29.8}&\textbf{151.6} \\
 MBPO &34.6&209.4&54.2& 225.0 \\
PETS  &26.2&193.4& - & - \\
\bottomrule
\end{tabular}

    \vspace{-0.3cm}
\end{table}
For model-free methods, we have provided the convergence results in Figure \ref{fig} for SAC (blue dots). Model-free methods generally converge after 100 episodes for Cartpole and Pendulum, and around 1000 episodes for Pusher and Reacher.

Hyperparameters for MBPO:
\begin{table}[H]
\centering
\begin{tabular}{|l|l|l|l|l|}
\hline
env                                                                   & cartpole & pendulum & pusher & reacher \\ \hline
\begin{tabular}[c]{@{}l@{}}env steps \\ per episode\end{tabular}      & 200      & 200      & 150    & 150     \\ \hline
\begin{tabular}[c]{@{}l@{}}model rollouts\\ per env step\end{tabular} & \multicolumn{4}{l|}{400}               \\ \hline
ensemble size                                                         & \multicolumn{4}{l|}{5}                 \\ \hline
\begin{tabular}[c]{@{}l@{}}network \\ architecture\end{tabular} &
  \begin{tabular}[c]{@{}l@{}}MLP with \\ 2 hidden layers \\ of size 200\end{tabular} &
  \begin{tabular}[c]{@{}l@{}}MLP with\\ 2 hidden layers\\ of size 200\end{tabular} &
  \begin{tabular}[c]{@{}l@{}}MLP with\\ 4 hidden layers \\ of size 200\end{tabular} &
  \begin{tabular}[c]{@{}l@{}}MLP with\\ 4 hidden layers\\ of size 200\end{tabular} \\ \hline
\begin{tabular}[c]{@{}l@{}}policy updates\\ per env step\end{tabular} & \multicolumn{4}{l|}{40}                \\ \hline
model horizon &
  \begin{tabular}[c]{@{}l@{}}1-\textgreater{}15 from \\ episode 1-\textgreater{}30\end{tabular} &
  \begin{tabular}[c]{@{}l@{}}1-\textgreater{}15 from \\ episode 1-\textgreater{}30\end{tabular} &
  1 &
  \begin{tabular}[c]{@{}l@{}}1-\textgreater{}15 from \\ episode 1-\textgreater{}30\end{tabular} \\ \hline
\end{tabular}
\caption{Hyperparamters for MBPO}
\label{tab:my-table}
\end{table}
And we provide hyperparamters for MPC and neural networks in PETS:
\begin{table}[H]
\centering
\begin{tabular}{|l|l|l|}
\hline
env                                                             & pusher    & reacher    \\ \hline
\begin{tabular}[c]{@{}l@{}}env steps\\ per episode\end{tabular} & 150       & 150        \\ \hline
popsize                                                         & 500       & 400        \\ \hline
\begin{tabular}[c]{@{}l@{}}number\\ of elites\end{tabular}      & 50        & 40         \\ \hline
\begin{tabular}[c]{@{}l@{}}network \\ architecture\end{tabular} & \multicolumn{2}{l|}{\begin{tabular}[c]{@{}l@{}}MLP with\\ 4 hidden layers\\ of size 200\end{tabular}} \\ \hline
\begin{tabular}[c]{@{}l@{}}planning\\ horizon\end{tabular}      & 25        & 25         \\ \hline
max iter                                                        & \multicolumn{2}{l|}{5} \\ \hline
ensemble size                                                   & \multicolumn{2}{l|}{5} \\ \hline
\end{tabular}
\caption{Hyperparamters for PETS}
\end{table}
Below are hyperparameters of our planning algorithm, which is the same with PETS, except for ensemble size (since we do not need ensembled models, hence our ensemble size is actually 1):

\begin{table}[H]
\centering
\begin{tabular}{|l|l|l|l|l|}
\hline
env                                                             & cartpole & pendulum & pusher & reacher \\ \hline
\begin{tabular}[c]{@{}l@{}}env steps\\ per episode\end{tabular} & 200      & 200      & 150    & 150     \\ \hline
popsize                                                         & 500      & 100      & 500    & 400     \\ \hline
\begin{tabular}[c]{@{}l@{}}number\\ of elites\end{tabular}      & 50       & 5        & 50     & 40      \\ \hline
\begin{tabular}[c]{@{}l@{}}network \\ architecture\end{tabular} &
  \begin{tabular}[c]{@{}l@{}}MLP with \\ 2 hidden layers\\ of size 200\end{tabular} &
  \begin{tabular}[c]{@{}l@{}}MLP with\\ 2 hidden layers\\ of size 200\end{tabular} &
  \begin{tabular}[c]{@{}l@{}}MLP with\\ 4 hidden layers\\ of size 200\end{tabular} &
  \begin{tabular}[c]{@{}l@{}}MLP with\\ 4 hidden layers\\ of size 200\end{tabular} \\ \hline
\begin{tabular}[c]{@{}l@{}}planning\\ horizon\end{tabular}      & 30       & 20       & 25     & 25      \\ \hline
max iter                                                        & \multicolumn{4}{l|}{5}                 \\ \hline
\end{tabular}
\caption{Hyperparamters for our method}
\end{table}


For SAC and DDPG, we use the open-source code ( \url{https://github.com/dongminlee94/deep_rl}) for implementation without changing their hyperparameters. Here we thank the authors for sharing the code!

We run all the experiments on a single NVIDIA GeForce RTX-2080Ti GPU. For smaller environments like Cartpole and Pendulum, all experiments are done within an hour to run 150k steps. For Reacher and Pusher, our algorithm takes about four hours to run 150k steps, while PETS and MBPO take about three hours to run 150k steps (our extra computation cost comes from Bayesian update, and we plan to explore acceleration for that as future work). SAC and DDPG take about one hour for training 150k steps which is much faster than other baselines since they are model-free algorithms and no need to train models.


\end{document}